\author{
  Jeffrey Regier\\
%  Department of Electrical Engineering and\\ Computer Science\\
%  University of California, Berkeley\\
  \texttt{jregier@cs.berkeley.edu} \\
  \And
  Michael I. Jordan \\
%  Department of Electrical Engineering and\\ Computer Science\\
%  University of California, Berkeley\\
  \texttt{jordan@cs.berkeley.edu} \\
  \And
  Jon McAuliffe\\
%  Department of Statistics\\
%  University of California, Berkeley\\
  \texttt{jon@stat.berkeley.edu}\\
}
\newtheorem{theorem}{Theorem}
\newtheorem{condition}{Condition}
\newtheorem{lemma}{Lemma}
\DeclareMathOperator*{\argmax}{\arg \max}
\DeclareMathOperator*{\as}{\; a.s.}
\renewcommand{\@noticestring}{}
\title{
Fast Black-box Variational Inference through
Stochastic Trust-Region Optimization
%Stochastic trust-region optimization for\\
%fast black-box variational inference
}
\begin{document}

\maketitle

\begin{abstract}
We introduce TrustVI, a fast second-order algorithm for black-box variational
inference based on trust-region optimization and the ``reparameterization trick.''
At each iteration, TrustVI proposes and assesses a step based on
minibatches of draws from the variational distribution.
The algorithm provably converges to a stationary point.
%To prove convergence to a stationary point, we show that 1) in expectation,
%TrustVI's iterates improve the value of a surrogate objective function in
%proportion to the trust region size, and 2) the trust region radius cannot
%remain arbitrarily small indefinitely if the gradient of the objective at the
%iterates is bounded away from zero.
We implemented TrustVI in the Stan framework and compared it to two alternatives: Automatic Differentiation Variational Inference (ADVI) and
Hessian-free Stochastic Gradient Variational Inference (HFSGVI).
The former is based on stochastic first-order optimization.
The latter uses second-order information, but lacks convergence guarantees.
TrustVI typically converged at least one order of magnitude faster than ADVI,
demonstrating the value of stochastic second-order information.
TrustVI often found substantially better variational distributions than
HFSGVI, demonstrating that our convergence theory can matter in practice.
\end{abstract}

\section{Introduction}

The ``reparameterization trick''~\cite{kingma2013auto,rezende2014stochastic,
titsias2014doubly} has led to a resurgence of interest in variational inference (VI), making it
applicable to essentially any differentiable model. This new approach, however,
requires stochastic optimization rather than fast deterministic optimization
algorithms like closed-form coordinate ascent. Some fast
stochastic optimization algorithms exist, but variational objectives have
properties that make them unsuitable: they are typically nonconvex, and the
relevant expectations cannot usually be replaced by finite sums. Thus, to date,
practitioners have used SGD and its variants almost exclusively. Automatic
Differentiation Variational Inference (ADVI)~\cite{kucukelbir2017automatic} has
been especially successful at making variational inference based on first-order
stochastic optimization accessible. Stochastic first-order optimization,
however, is slow in theory (sublinear convergence) and in practice (thousands of
iterations), negating a key benefit of VI.

%Most existing second-order stochastic methods are not directly applicable to
%Monte Carlo approximations of the variational objective.  Some methods require
%that the objective be a finite sum~\cite{agarwal2016second,poole2014fast}; the
%variational objective is, in general, an integral over continuous values.  Other
%methods allow stochastic estimates of the Hessian, but require exact
%gradients~\cite{erdogdu2015convergence,byrd2011use,agarwal2016second}. Other
%methods apply exclusively to convex problems~\cite{agarwal2016second,
%schraudolph2007stochastic}; variational objectives are nonconvex in general.
%Still other second-order stochastic methods limit their rates of convergence by
%adhering to a fixed square-summable learning rate
%schedule~\cite{byrd2016stochastic}, thereby recreating a key deficiency of SGD,
%despite using second-order information.  Others lack convergence
%guarantees~\cite{martens2010deep}.

This article presents TrustVI, a fast algorithm for variational inference based
on second-order trust-region optimization and the reparameterization trick.
TrustVI routinely converges in tens of iterations for models that take thousands
of ADVI iterations. TrustVI's iterations can be more expensive, but on a large
collection of Bayesian models, TrustVI typically reduced total computation by an
order of magnitude. Usually TrustVI and ADVI find the same objective value, but
when they differ, TrustVI is typically better.

TrustVI adapts to the stochasticity of the optimization problem, raising the
sampling rate for assessing proposed steps based on a Hoeffding bound. It provably
converges to a stationary point. TrustVI generalizes the Newton trust-region
method~\cite{nocedal2006numerical}, which converges quadratically and has
performed well at optimizing analytic variational objectives even at an extreme
scale~\cite{regier2016learning}. With large enough minibatches, TrustVI
iterations are nearly as productive as those of a deterministic
trust region method. Fortunately, large minibatches make effective use of
single-instruction multiple-data (SIMD) parallelism on modern CPUs and GPUs.

TrustVI uses either explicitly formed approximations of Hessians or approximate
Hessian-vector products. Explicitly formed Hessians can be fast for
low-dimensional problems or problems with sparse Hessians, particularly when
expensive computations (e.g., exponentiation) already need to be performed
to evaluate a gradient. But Hessian-vector products are often more convenient.
They can be computed efficiently through forward-mode automatic differentiation,
reusing the implementation for computing gradients~\cite{fike2012automatic,pearlmutter1994fast}. This is the approach we take in our experiments.

Fan et al. \cite{fan2015fast} also note the limitations of first-order
stochastic optimization for variational inference: the learning rate is
difficult to set, and convergence is especially slow for models with substantial
curvature. Their approach is to apply Newton's method or L-BFGS to problems that
are both stochastic and nonconvex. All stationary points---minima, maxima, and
saddle points---act as attractors for Newton steps, however, so while Newton's
method may converge quickly, it may also converge poorly. Trust region methods,
on the other hand, are not only unharmed by negative curvature, they exploit it:
descent directions that become even steeper are among the most productive. In
section~\ref{experiments}, we empirically compare TrustVI to Hessian-free
Stochastic Gradient Variation Inference (HFSGVI) to assess the practical importance of
our convergence theory.

TrustVI builds on work from the derivative-free optimization
community~\cite{shashaani2016astro,deng2009variable, chen2017stochastic}. The
STORM framework~\cite{chen2017stochastic} is general enough to apply to a
derivative-free setting, as well as settings where higher-order stochastic
information is available. STORM, however, requires that a quadratic model of the
objective function can always be constructed such that, with non-trivial
probability, the quadratic model's absolute error is uniformly bounded
throughout the trust region. That requirement can be satisfied for the kind of
low-dimensional problems one can optimize without derivatives, where the
objective may be sampled throughout the trust region at a reasonable density,
but not for most variational objective functions.

\section{Background}
Variational inference chooses an approximation to the posterior distribution
from a class of candidate distributions through numerical
optimization~\cite{blei2017variational}.
The candidate approximating distributions $q_\omega$ are parameterized by a
real-valued vector $\omega$.
The variational objective function
$\mathcal L$, also known as the evidence lower bound (ELBO), is an expectation
with respect to latent variables $z$ that follow an approximating distribution
$q_\omega$:
\begin{align}
\mathcal L(\omega) \triangleq \mathbb E_{q_\omega} \left\{ \log p(x,z) - \log q_\omega(z) \right\}.\label{elbo}
\end{align}
Here $x$, the data, is fixed. If this expectation has an
analytic form, $\mathcal L$ may be maximized by deterministic optimization
methods, such as coordinate ascent and Newton's method. Realistic Bayesian
models, however, not selected primarily for computational convenience, seldom
yield variational objective functions with analytic forms.

Stochastic optimization offers an alternative.  For many common classes of
approximating distributions, there exists a base distribution $p_0$ and a
function $g_\omega$ such that, for $e \sim p_0$ and $z \sim q_\omega$,
$g_\omega(e) \stackrel{d}{=} z$.
In words: the random variable $z$ whose distribution depends on $\omega$,
is a deterministic function of a random variable $e$ whose distribution does not depend on $\omega$.
This alternative expression of the variational
distribution is known as the ``reparameterization
trick''~\cite{spall2005introduction, kingma2013auto,rezende2014stochastic,
titsias2014doubly}. At each iteration of an optimization procedure, $\omega$ is
updated based on an unbiased Monte Carlo approximation to the objective
function:
\begin{align}
\hat{\mathcal L}(\omega; e_1, \ldots, e_N) &\triangleq
    \frac{1}{N}\sum_{i=1}^N \left\{
        \log p(x, g_\omega(e_i)) - \log q_\omega(g_\omega(e_i)) \label{mcelbo}
    \right\}
\end{align}
for $e_1,\ldots,e_N$ sampled from the base distribution.

\section{TrustVI}
TrustVI performs stochastic optimization of the ELBO $\mathcal L$ to find a
distribution $q_\omega$ that approximates the posterior.
For TrustVI to converge, the ELBO only needs to satisfy Condition~\ref{cond:L}.
(Subsequent conditions apply to the algorithm specification, not the optimization problem.)

\begin{condition}
\label{cond:L}
$\mathcal L : \mathbb R^D \to \mathbb R$ is a twice-differentiable function
of $\omega$ that is bounded above.
Its gradient has Lipschitz constant $L$.
\end{condition}

Condition~\ref{cond:L} is compatible with all models whose conditional
distributions are in the exponential family. The ELBO for a model with
categorical random variables, for example, is twice differentiable in its
parameters when using a mean-field categorical variational distribution.
%The Lipschitz constant $L$ applies to $\nabla \mathcal L$, not $\mathcal L$: the
%objective function may change rapidly but the gradient may not.
%$L$ does not need to be known explicitly.
%it appears in the proof of convergence but not in the algorithm.

The domain of $\mathcal L$ is taken to be all of $\mathbb R^D$. If instead the
domain is a proper subset of a real coordinate space, the ELBO can often be
reparameterized so that its domain is $\mathbb
R^D$~\cite{kucukelbir2017automatic}.

TrustVI iterations follow the form of common deterministic trust region
methods: 1) construct a quadratic model of the objective function restricted to
the current trust region; 2) find an approximate optimizer of the model
function: the proposed step; 3) assess whether the proposed step leads to an
improvement in the objective; and 4) update the iterate and the trust region
radius based on the assessment. After introducing notation in Section~\ref{notation},
we describe proposing a step in Section~\ref{propose} and assessing a proposed step
in Section~\ref{proposedstep}.
TrustVI is summarized in Algorithm~\ref{alg:trustvi}.

\begin{algorithm}[t]
\caption{TrustVI}
\label{alg:trustvi}
\begin{algorithmic}
\REQUIRE Initial iterate $\omega_0 \in \mathbb R^D$;
		 initial trust region radius $\delta_0 \in (0, \delta_{max}]$;
     and settings for the parameters listed in Table~\ref{params}.
\FOR{$k=0,1,2,\ldots$}
	\STATE Draw stochastic gradient $g_k$ satisfying Condition~\ref{cond:g}.
    \STATE Select symmetric matrix $H_k$ satisfying Condition~\ref{cond:H}.
    \STATE Solve for $s_k \triangleq \argmax g_k^\intercal s + \frac{1}{2} s^\intercal H_k s : \|s\| \le \delta_k$.
    \STATE Compute $m_k' \triangleq g_k^\intercal s_k + \frac{1}{2} s_k^\intercal H_k s_k$.
    \STATE Select $N_k$ satisfying Inequality~\ref{inequalityN} and Inequality~\ref{inequalityN2}.
    \STATE Draw $\ell_{k1}',\ldots,\ell_{kN_k}'$ satisfying Condition~\ref{cond:l}.
    \STATE Compute $\ell_k' \triangleq \frac{1}{N_k}\sum_{i=1}^{N_k} \ell_{ki}'$.
    \IF{$\ell_k' \ge \eta m_k' \ge \lambda \delta_k^2$}
      \STATE $\omega_{k+1} \leftarrow \omega_k + s_k$
	  	\STATE $\delta_{k+1} \leftarrow \min(\gamma \delta_k, \delta_{max})$
  	\ELSE
      \STATE $\omega_{k+1} \leftarrow \omega_k$
	  	\STATE $\delta_{k+1} \leftarrow \delta_k / \gamma$
  	\ENDIF
\ENDFOR
\end{algorithmic}
\end{algorithm}

\begin{table}
\centering
\caption{User-selected parameters for TrustVI}
\label{params}
\centering
\renewcommand{\arraystretch}{1.1}

\vspace{3px}
\begin{tabular}{llll}
\hline
\textbf{name} & \textbf{brief description} & \textbf{allowable range} \\
\hline
$\eta$  & model fitness threshold & $(0, 1/2]$ \\
$\gamma$  & trust region expansion factor & $(1, \infty)$ \\
$\lambda$ & trust region radius constraint & $(0, \infty)$ \\
$\alpha$ & tradeoff between trust region radius and objective value & $(\lambda / (1 - \gamma^{-2}), \infty)$ \\
$\nu_1$ & tradeoff between both sampling rates & $(0, 1 - \eta)$ \\
$\nu_2$ & accuracy of ``good'' stochastic gradients' norms & $(0, 1)$ \\
$\nu_3$ & accuracy of ``good'' stochastic gradients' directions & $(0, 1 - \eta - \nu_1)$ \\
$\zeta_0$ & probability of ``good'' stochastic gradients & $(1/2, 1)$ \\
$\zeta_1$ & probability of accepting a ``good'' step & $(1/(2\zeta_0), 1)$ \\
$\kappa_H$ & maximum norm of the quadratic models' Hessians  & $[0, \infty)$ \\
$\delta^-$ & maximum trust region radius for enforcing some conditions & $(0, \infty]$ \\
$\delta_{max}$ & maximum trust region radius & $(0, \infty)$ \\
\hline
\end{tabular}
\end{table}

\subsection{Notation}\label{notation}
TrustVI's iteration number is denoted by $k$. During iteration $k$, until
variables are updated at its end, $\omega_k$ is the iterate, $\delta_k$ is the
trust region radius, and $\mathcal L(\omega_k)$ is the objective-function value.
As shorthand, let $\mathcal L_k \triangleq \mathcal L(\omega_k)$.

During iteration $k$, a quadratic model $m_k$ is formed based on a stochastic
gradient $g_k$ of $\mathcal L(\omega_k)$, as well as a local Hessian
approximation $H_k$. The maximizer of this model on the trust region, $s_k$, we
call the proposed step. The maximum, denoted $m_k' \triangleq m_k(s_k)$, we
refer to as the model improvement. We use the ``prime'' symbol to denote changes
relating to a proposed step $s_k$ that is not necessarily accepted; e.g.,
$\mathcal L_k' = \mathcal L(\omega_k + s_k) - \mathcal L_k$. We use the $\Delta$
symbol to denote change across iterations; e.g., $\Delta \mathcal L_k = \mathcal
L_{k+1} - \mathcal L_k$. If a proposed step is accepted, then, for example,
$\Delta \mathcal L_k = \mathcal L_k'$ and $\Delta \delta_k = \delta_k'$.

Each iteration $k$ has two sources of randomness: $m_k$ and $\ell_k'$, an
unbiased estimate of $\mathcal L_k'$ that determines whether to accept proposed
step $s_k$. $\ell_k'$ is based on an iid random sample of size $N_k$
(Section~\ref{proposedstep}).

For the random sequence $m_1, \ell_1', m_2, \ell_2', \ldots$, it is often useful
to condition on the earlier variables when reasoning about the next.
Let $\mathcal M_k^-$ refer to the $\sigma$-algebra generated by
$m_1,\ldots,m_{k-1}$ and $\ell_1',\ldots,\ell_{k-1}'$. When we condition on
$\mathcal M_k^-$, we hold constant all the outcomes that precede iteration $k$.
Let $\mathcal M_k^+$ refer to the $\sigma$-algebra generated by
$m_1,\ldots,m_{k}$ and $\ell_1',\ldots,\ell_{k-1}'$. When we condition on
$\mathcal M_k^+$, we hold constant all the outcomes that precede drawing the
sample that determines whether to accept the $k$th proposed step.

Table~\ref{params} lists the user-selected parameters that govern the behavior
of the algorithm. TrustVI converges to a stationary point for any selection of
parameters in the allowable range (column 3). As shorthand, we refer to a
particular trust region radius, derived from the user-selected parameters, as
\begin{align}
\delta_k^- \triangleq \min\left(\delta^-, \sqrt{ \frac{\eta m_k'}{\lambda}}, \frac{\nu_2 \nu_3 \|\nabla \mathcal L_k\|}{\nu_2 L + \nu_2 \eta \kappa_H + 8 \kappa_H}\right).
\end{align}

\subsection{Proposing a step}\label{propose}

At each iteration, TrustVI proposes the step $s_k$ that maximizes the local
quadratic approximation
\begin{align}
m_k(s) = \mathcal L_k + g_k^\intercal s + \frac{1}{2} s^\intercal H_k s \;:\; \|s\| \le \delta_k
\end{align}
to the function $\mathcal L$ restricted to the trust region.

We set $g_k$ to the gradient of $\hat {\mathcal L}$ at $\omega_k$, where $\hat
{\mathcal L}$  is evaluated using a freshly drawn sample $e_1,\ldots,e_N$. From
Equation~\ref{mcelbo} we see that $g_k$ is a stochastic gradient constructed
from a minibatch of size $N$. We must choose $N$ large enough to satisfy the
following condition:

\begin{condition}
\label{cond:g}
If $\delta_k \le \delta_k^-$, then, with probability $\zeta_0$, given $\mathcal M_k^-$,
\begin{align}
g_k^\intercal \nabla \mathcal L_k \ge (\nu_1 + \nu_3) \| \nabla \mathcal L_k \| \|g_k \| + \eta \|g_k\|^2
\end{align}
and
\begin{align}
\|g_k\| \ge \nu_2 \| \nabla \mathcal L_k \|.
\end{align}
\end{condition}

Condition~\ref{cond:g} is the only restriction on the stochastic gradients: they
have to point in roughly the right direction most of the time, and
they have to be of roughly the right magnitude when they do. By constructing the
stochastic gradients from large enough minibatches of draws from the variational
distribution, this condition can always be met.

In practice, we cannot observe $\nabla \mathcal L$, and we do not explicitly set
$\nu_1$, $\nu_2$, and $\nu_3$. Fortunately, Condition~\ref{cond:g} holds as long
as our stochastic gradients remain large in relation to
their variance. Because we base each stochastic gradient on at least one
sizable minibatch, we always have many iid samples to inform us about
the population of stochastic gradients. We use a jackknife
estimator~\cite{efron1981jackknife} to conservatively bound the standard
deviation of the norm of the stochastic gradient. If the norm of a given
stochastic gradient is small relative to its standard deviation, we double the
next iteration's sampling rate. If it is large relative to its standard deviation,
we halve it. Otherwise, we leave it unchanged.

The gradient observations may include randomness from sources other than
sampling the variational distribution too. In the ``doubly stochastic''
setting~\cite{titsias2014doubly}, for example, the data is also subsampled. This setting is
fully compatible with our algorithm, though the size of the subsample may need to vary across iterations. To simplify our presentation, we
henceforth only consider stochasticity from sampling the variational distribution.

Condition~\ref{cond:H} is the only restriction on the quadratic models' Hessians.
\begin{condition}
\label{cond:H}
There exists finite $\kappa_H$ satisfying, for the spectral norm,
\begin{align}
\|H_k\| \le \kappa_H \as
\end{align}
for all iterations $k$ with $\delta_k \le \delta_k^-$.
\end{condition}

For concreteness we bound the spectral norm of $H_k$, but a bound on any $L_p$ norm
suffices.
The algorithm specification does not involve $\kappa_H$, but the convergence
proof requires that $\kappa_H$ be finite. This condition suffices to ensure
that, when the trust region is small enough, the model's Hessian cannot
interfere with finding a descent direction. With such mild conditions, we are
free to use nearly arbitrary Hessians. Hessians may be formed like the
stochastic gradients, by sampling from the variational distribution. The number
of samples can be varied. The quadratic model's Hessian could even be
set to the identity matrix if we prefer not to compute second-order information.

Low-dimensional models, and models with block diagonal Hessians, may be
optimized explicitly by inverting $-H_k + \alpha_k I$, where $\alpha_k$ is
either zero for interior solutions, or just large enough that $(-H_k + \alpha_k
I)^{-1}g_k$ is on the boundary of the trust region~\cite{nocedal2006numerical}.
Matrix inversion has cubic runtime though, and even explicitly storing $H_k$ is
prohibitive for many variational objectives.

In our experiments, we instead maximize the model without explicitly storing the
Hessian, through Hessian-vector multiplication, assembling Krylov subspaces
through both conjugate gradient iterations and Lanczos
iterations~\cite{gould1999solving, lenders2016trlib}.
%These methods do not
%exactly maximize the quadratic model, but for convergence, we do not need them
%too. It suffices to solve them exactly only when $\delta_k \le \delta_k^-$.
We reuse our Hessian approximation for two consecutive iterations if the
iterate does not change (i.e., the proposed steps are rejected).
A new stochastic gradient $g_k$ is still drawn for each of these iterations.

\subsection{Assessing the proposed step}
\label{proposedstep}

Deterministic trust region methods only accept steps that improve the objective by enough. In a stochastic setting, we must ensure that accepting ``bad'' steps is improbable while accepting ``good'' steps is likely.

To assess steps, TrustVI draws new samples from the variational
distribution---we may not reuse the samples that $g_k$ and $H_k$ are based on.
The new samples are used
to estimate both $\mathcal L(\omega_k)$ and $\mathcal L(\omega_k + s_k)$.
Using the same sample to estimate both quantities is analogous to a
matched-pairs experiment; it greatly reduces the variance of the improvement
estimator. Formally, for $i=1,\ldots,N_K$, let $e_{ki}$ follow
the base distribution and set
\begin{align}
\ell_{ki}' \triangleq \hat{\mathcal L}(\omega_k + s_k; e_{ki}) - \hat{\mathcal L}(\omega_k; e_{ki}).
\end{align}
Let
\begin{align}
\ell_k' \triangleq \frac{1}{N_k}\sum_{i=1}^{N_k} \ell_{ki}'.
\end{align}
Then, $\ell_k'$ is an unbiased estimate of $\mathcal L_k'$---the quantity
a deterministic trust region method would use to assess the proposed step.

\subsubsection{Choosing the sample size}
To pick the sample size $N_K$, we need additional control on the distribution
of the $\ell_{ki}'$. The next condition gives us that.
\begin{condition}
\label{cond:l}
For each $k$, there exists finite $\sigma_k$ such that
the $\ell_{ki}'$ are $\sigma_k$-subgaussian.
\end{condition}
Unlike the quantities we have introduced earlier, such as $L$ and $\kappa_H$,
the $\sigma_k$ need to be known to carry out the algorithm.
Because $\ell_{k1}',\ell_{k2}',\ldots$ are iid, $\sigma_k^2$ may be estimated---after
the sample is drawn---by the population variance formula, i.e.,
$\frac{1}{N_k - 1}\sum_{i=1}^{N_k} (\ell_{ki}' - \ell_k')$.
We discuss below, in the context of setting $N_k$, how to make use
of a ``retrospective'' estimate of $\sigma_k$ in practice.

Two user-selected constants control what steps are accepted: $\eta \in (0,1/2)$
and $\lambda > 0$. The step is accepted iff 1) the observed improvement
$\ell_{k}'$ exceeds the fraction $\eta$ of the model improvement $m_k'$, and 2)
the model improvement is at least a small fraction $\lambda / \eta$ of the
trust region radius squared. Formally, steps are accepted iff
\begin{align}
\ell_k' \ge \eta m_k' \ge \lambda \delta_k^2.
\end{align}

If $\eta m_k' < \lambda \delta_k^2$, the step is rejected regardless of $\ell_k'$:
we set $N_k = 0$.

Otherwise, we pick the smallest $N_k$ such that
\begin{align}
N_k \ge \frac{2\sigma_k^2}{(\eta m_k' + y)^2} \log\left(\frac{\tau_2\delta_k^2 + y}{\tau_1 \delta_k^2}\right),\;\;
\forall y > \max \left(-\frac{\eta m_k'}{2}, -\tau_2\delta_k^2\right)\label{inequalityN}
\end{align}
where
\begin{align}
\tau_1 \triangleq \alpha(1 - \gamma^{-2}) - \lambda
\textrm{\;\;\; and \;\;\;}
\tau_2 \triangleq \alpha(\gamma^{2}-\gamma^{-2}).
\end{align}
Finding the smallest such $N_k$ is a
one-dimensional optimization problem.
We solve it via bisection.

Inequality~\ref{inequalityN} ensures that we sample enough to reject most steps
that do not improve the objective sufficiently. If we knew exactly how a proposed
step changed the objective, we could express in closed form how many samples
would be needed to detect bad steps with sufficiently high probability.  Since
we do not know that, Inequality~\ref{inequalityN} is for all such change-values in a
range. Nonetheless, $N_k$ is rarely large in practice: the second factor
lower bounding $N_k$ is logarithmic in $y$; in the first factor the denominator
is bounded away from zero.

Finally, if $\delta_k \le \delta_k^-$, we also ensure $N_k$ is large enough that
\begin{align}
N_k \ge \frac{-2 \sigma_k^2 \log (1 - \zeta_1) }{\nu_1^2 \| \nabla \mathcal L_k \|^2 \delta_k^2}.\label{inequalityN2}
\end{align}
Selecting $N_k$ this large ensures that we sample enough
to detect most steps that improve the value of the objective sufficiently when
the trust region is small.
This bound is not high in practice.
Because of how the $\ell_{ki}'$ are collected (a ``matched-pairs experiment''),
as $\delta_k$ becomes small, $\sigma_k$ becomes small too, at roughly the same rate.

In practice, at the end of each iteration, we estimate whether $N_k$ was large
enough to meet the conditions. If not, we set $N_{k+1} = 2N_k$. If $N_k$ exceeds
the size of the gradient's minibatch, and it is more than twice as large as
necessary to meet the conditions, we set $N_{k+1} = N_k / 2$. These
$N_k$ function evaluations require little computation compared to computing
gradients and Hessian-vector products.

\section{Convergence to a stationary point}

To show that TrustVI converges to a stationary point, we reason about
the stochastic process $(\phi_k)_{k=1}^{\infty}$, where
\begin{align}
\phi_k \triangleq \mathcal L_k - \alpha \delta_k^2.
\end{align}
In words, $\phi_k$ is the objective function penalized by the weighted squared trust
region radius.

Because TrustVI is stochastic, neither $\mathcal L_k$ nor $\phi_k$ necessarily
increase at every iteration. But, $\phi_k$ increases in expectation at each
iteration (Lemma~\ref{lemma:neverlose}). That alone, however, does not suffice
to show TrustVI reaches a stationary point; $\phi_k$ must increase in
expectation by enough at each iteration.

Lemma~\ref{lemma:neverlose} and Lemma~\ref{lemma:probablyaccept} in combination
show just that. The latter states that the trust region radius cannot remain
small unless the gradient is small too, while the former states that the
expected increase is a constant fraction of the squared trust region radius.
Perhaps  surprisingly, Lemma~\ref{lemma:neverlose} does not depend on the
quality of the quadratic model: Rejecting a proposed step always leads to
sufficient increase in $\phi_k$. Accepting a bad step, though possible, rapidly becomes less
likely as the proposed step gets worse. No matter how bad a proposed step is,
$\phi_k$ increases in expectation.

Theorem~\ref{theorem:main} uses the lemmas to show convergence by
contradiction. The structure of its proof, excluding the proofs of the lemmas,
resembles the proof from~\cite{nocedal2006numerical} that a deterministic
trust region method converges. The lemmas' proofs, on the other hand, more
closely resemble the style of reasoning in the stochastic optimization
literature~\cite{chen2017stochastic}.

\begin{theorem}For Algorithm~\ref{alg:trustvi},
\label{theorem:main}
\begin{align}
\lim_{k \to \infty} \| \nabla \mathcal L_k \| = 0 \as
\end{align}
\end{theorem}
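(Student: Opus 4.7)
The plan is a proof by contradiction structured in two stages, with the same overall shape as the deterministic trust-region proof of Nocedal and Wright but with the probabilistic guarantees of Lemma~\ref{lemma:neverlose} and Lemma~\ref{lemma:probablyaccept} in place of their deterministic analogues. Throughout I would exploit the two-part filtration $\mathcal{M}_k^\pm$ to isolate the randomness of $g_k$ and $H_k$ (measurable with respect to $\mathcal{M}_k^+$) from the fresh randomness of the acceptance sample (which is independent of $\mathcal{M}_k^+$).

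First, I would show $\sum_k \delta_k^2 < \infty$ almost surely. Condition~\ref{cond:L} gives $\phi_k = \mathcal{L}_k - \alpha \delta_k^2 \le \mathcal{L}_k \le \sup \mathcal{L} < \infty$, so $(\phi_k)$ is uniformly bounded above. Lemma~\ref{lemma:neverlose} supplies a conditional lower bound of the form $E[\phi_{k+1} - \phi_k \mid \mathcal{M}_k^-] \ge c\, \delta_k^2$ for some constant $c>0$. Telescoping in expectation via the tower property yields $c \sum_k E[\delta_k^2] \le \sup \mathcal{L} - E[\phi_0] < \infty$, so $\sum_k \delta_k^2 < \infty$ almost surely and in particular $\delta_k \to 0$ almost surely.

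Next, assume for contradiction that there exist $\epsilon > 0$ and an event $B$ of positive probability on which $\|\nabla \mathcal{L}_k\| \ge 2\epsilon$ for infinitely many $k$. By the first step, on $B$ we have $\delta_k \to 0$, so at all but finitely many of these indices $\delta_k$ falls below the threshold $\delta_k^-$ appearing in Lemma~\ref{lemma:probablyaccept}. That lemma then provides a uniform positive conditional probability, given $\mathcal{M}_k^+$, that the step is accepted and the radius multiplied by $\gamma$. Since this probability exceeds $1/2$ (recall $\zeta_0, \zeta_1 > 1/2$ from Table~\ref{params}), a conditional Borel--Cantelli argument on the adapted sequence of acceptance events along these indices forces $\log \delta_k$ to drift upward on $B$, so $\delta_k$ cannot converge to zero on $B$. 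This contradicts the first step, so no such $\epsilon$ and $B$ exist, and $\|\nabla \mathcal{L}_k\| \to 0$ almost surely.

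The main obstacle is this second step. In the deterministic Nocedal proof, whenever the radius is small and the gradient is large, the next step is accepted outright and the contradiction with $\delta_k \to 0$ is immediate; here, Lemma~\ref{lemma:probablyaccept} only guarantees positive conditional probability of acceptance, and rejected steps further shrink the radius. Making the drift-plus-Borel--Cantelli step rigorous requires careful use of the fact that $\ell_k'$ is drawn from a fresh sample given $\mathcal{M}_k^+$, so that the acceptance indicators form an adapted sequence whose conditional probabilities can be combined into an almost-sure statement about the asymptotic behavior of $\delta_k$. Once that coupling is set up, the contradiction with square-summability from the first step closes the argument.
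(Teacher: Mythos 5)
Your first stage is sound and essentially matches the paper's (your version, which telescopes unconditional expectations to get $\sum_k \mathbb{E}[\delta_k^2] < \infty$ and hence $\sum_k \delta_k^2 < \infty$ a.s., is if anything cleaner than the paper's argument via $\mathbb{E}[\Delta\phi_k \mid \mathcal{M}_k^-] \to 0$). The problem is in your second stage, and it is a genuine gap, not merely a matter of setting up the filtration carefully. You negate the conclusion as ``$\|\nabla\mathcal{L}_k\| \ge 2\epsilon$ for infinitely many $k$ on an event $B$'' and try to contradict $\delta_k \to 0$ directly. But Lemma~\ref{lemma:probablyaccept} only gives you favorable acceptance probability at the indices where the gradient is large \emph{and} $\delta_k \le \delta_k^-$ (a threshold that itself depends on $\|\nabla\mathcal{L}_k\|$). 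At all the intermediate indices --- where the gradient may be small --- you have no lower bound on the acceptance probability, and every rejection divides the radius by $\gamma$. So even if conditional Borel--Cantelli gives you infinitely many acceptances along your subsequence, an increase of $\log_\gamma \delta_k$ by $1$ at sparsely spaced indices is perfectly compatible with $\delta_k \to 0$: the radius can be cut arbitrarily many times between consecutive indices of the subsequence. There is no upward drift of $\log \delta_k$ to speak of, and the contradiction does not close.

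The paper avoids this by splitting the conclusion into two pieces. First it proves $\liminf_k \|\nabla\mathcal{L}_k\| = 0$ a.s.: the contradiction hypothesis there is that the gradient is bounded below by $\epsilon$ for \emph{all} sufficiently large $k$, which (combined with $\delta_k \to 0$) makes $\delta_k \le \delta_k^-$ hold at \emph{every} iteration beyond some $K$, so that $(\log_\gamma \delta_k)_{k \ge K}$ is a genuine submartingale with positive drift $2\zeta_0\zeta_1 - 1 > 0$ at every step; a submartingale bounded above cannot tend to $-\infty$, contradicting $\delta_k \to 0$. Only then is the $\liminf$ upgraded to a full limit using $\delta_k \to 0$. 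If you want to salvage your one-shot $\limsup$ attack, you would need to control the radius dynamics at the iterations where the gradient is small, and the lemmas give you nothing there; the two-stage structure is what makes the submartingale argument available. (Two smaller points: Lemma~\ref{lemma:probablyaccept} conditions on $\mathcal{M}_k^-$, not $\mathcal{M}_k^+$; and the acceptance probability exceeds $1/2$ because $\zeta_1 > 1/(2\zeta_0)$ so that $\zeta_0\zeta_1 > 1/2$, not because each factor separately exceeds $1/2$.)
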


\begin{proof}
By Condition~\ref{cond:L}, $\mathcal L$ is bounded above.
The trust region radius $\delta_k$ is positive almost surely by construction.
Therefore, $\phi_k$ is bounded above almost surely by the constant $\sup \mathcal L$.
Let the constant $c \triangleq \sup \mathcal L - \phi_0$.
Then,
\begin{align}
\sum_{k=1}^\infty \mathbb E[\Delta \phi_k \mid \mathcal M_k^-] \le c \as
\end{align}
By Lemma~\ref{lemma:neverlose},
$\mathbb E[\Delta \phi_k \mid \mathcal M_k^+]$, and hence
$\mathbb E[\Delta \phi_k \mid \mathcal M_k^-]$, is almost surely nonnegative.
Therefore, $\mathbb E[\Delta \phi_k \mid \mathcal M_k^-] \to 0$ almost surely.
By an additional application of Lemma~\ref{lemma:neverlose},
$\delta_k^2 \to 0$ almost surely too.

Suppose there exists $K_0$ and $\epsilon > 0$ such that
$\|\nabla \mathcal L_k \| \ge \epsilon$ for all $k > K_1$.
Fix $K \ge K_0$ such that $\delta_k$ meets the conditions of
Lemma~\ref{lemma:probablyaccept} for all $k \ge K$.
By Lemma~\ref{lemma:probablyaccept}, $(\log_\gamma\; \Delta \delta_k)_K^\infty$ is a
submartingale. A submartingale almost surely does not go to $-\infty$,
so $\delta_k$ almost surely does not go to 0.
The contradiction implies that $\|\nabla \mathcal L_k\| < \epsilon$ infinitely
often.

Because our choice of $\epsilon$ was arbitrary,
\begin{align}
\liminf_{k \to \infty} \| \nabla \mathcal L_k \| = 0 \as
\end{align}
Because $\delta_k^2 \to 0$ almost surely, this limit point is unique.
\end{proof}

%%%%%%%%%%%%%%%%%%%%%%%%%%%%%%%%%%%%%%%%%%%%%%%%%%%%%%%%%%%%%%%%%%%%%%%%%%%%%%%

\begin{lemma}
\label{lemma:neverlose}
\begin{align}
\mathbb E \left[ \Delta \phi_k \mid \mathcal M_k^+ \right] \ge \lambda\delta_k^2 \as
\end{align}
\end{lemma}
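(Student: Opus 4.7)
The plan is to partition on the algorithm's deterministic gate and then evaluate the conditional expectation in closed form. First, if $\eta m_k' < \lambda\delta_k^2$, the step is rejected regardless of $\ell_k'$, so $\omega_{k+1} = \omega_k$ and $\delta_{k+1} = \delta_k/\gamma$ almost surely; hence $\Delta \phi_k = \alpha(1 - \gamma^{-2})\delta_k^2$ deterministically, and the parameter-range condition $\alpha > \lambda/(1 - \gamma^{-2})$ delivers $\Delta \phi_k > \lambda\delta_k^2$ immediately.

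In the complementary case $\eta m_k' \ge \lambda\delta_k^2$, let $P_A \triangleq \Pr(\ell_k' \ge \eta m_k' \mid \mathcal M_k^+)$ denote the acceptance probability. Since $\mathcal L_k'$ and $\delta_k$ are both $\mathcal M_k^+$-measurable, writing out the expectation over the acceptance indicator and using $\Delta\delta_{k,\mathrm{acc}}^2 \le (\gamma^2 - 1)\delta_k^2$ together with the definitions $\beta \triangleq \alpha(1 - \gamma^{-2})$ and $\tau_2 = \alpha(\gamma^2 - \gamma^{-2})$ yields
\begin{align*}
\mathbb E[\Delta \phi_k \mid \mathcal M_k^+] \ge \beta\delta_k^2 + P_A\bigl(\mathcal L_k' - \tau_2\delta_k^2\bigr).
\end{align*}
Because $\beta = \tau_1 + \lambda$, the lemma reduces to establishing $P_A\bigl(\tau_2\delta_k^2 - \mathcal L_k'\bigr) \le \tau_1\delta_k^2$, which is immediate once $\mathcal L_k' \ge \tau_2\delta_k^2$. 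Otherwise, writing $y = -\mathcal L_k'$ and applying the subgaussian Hoeffding bound supplied by Condition~\ref{cond:l} gives $P_A \le \exp\bigl(-N_k(\eta m_k' + y)^2/(2\sigma_k^2)\bigr)$ whenever $\eta m_k' + y > 0$; plugging in the sample-size lower bound from Inequality~\ref{inequalityN} recovers exactly $P_A \le \tau_1\delta_k^2/(\tau_2\delta_k^2 + y)$ in the $y$-range covered by that inequality.

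The main obstacle I expect is handling the intermediate regime $y \in (-\tau_2\delta_k^2,\, -\eta m_k'/2]$, where Inequality~\ref{inequalityN} is not directly stated and the Hoeffding deviation $\eta m_k' - \mathcal L_k'$ can be small or even non-positive. In that range $\mathcal L_k' \ge \eta m_k'/2 \ge \lambda\delta_k^2/2$, so the $P_A \mathcal L_k'$ contribution in the reduced estimate is favorable; I would expect to close the gap by combining this positive contribution with a limiting argument as $y \to (-\eta m_k'/2)^+$ that carries the Hoeffding-plus-Inequality~\ref{inequalityN} estimate over from the edge of the covered range. Stitching together the three regimes cleanly, while respecting the implicit constraints among $\alpha$, $\gamma$, $\lambda$, and $\eta$ encoded in Table~\ref{params}, is the step I would most carefully verify against the authors' write-up.
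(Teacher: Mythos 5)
Your write-up tracks the paper's proof almost line for line: the same decomposition $\mathbb E[\Delta \phi_k \mid \mathcal M_k^+] = \pi[\mathcal L_k' - \tau_2\delta_k^2] + \tau_1\delta_k^2 + \lambda\delta_k^2$, the same immediate disposal of the cases $\eta m_k' < \lambda\delta_k^2$ and $\mathcal L_k' \ge \tau_2\delta_k^2$, and the same reduction of the remaining case to showing $\pi \le \tau_1\delta_k^2/(\tau_2\delta_k^2 - \mathcal L_k')$ via Hoeffding's inequality and Inequality~\ref{inequalityN} evaluated at $y = -\mathcal L_k'$. The one point of divergence is the ``obstacle'' you flag, and you are right to flag it: the paper does not handle the regime $\eta m_k'/2 \le \mathcal L_k' < \tau_2\delta_k^2$ either. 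Its proof applies the one-sided Hoeffding tail bound with deviation $\eta m_k' - \mathcal L_k'$ (valid only when that deviation is nonnegative) and then substitutes Inequality~\ref{inequalityN}, which is quantified only over $y > -\eta m_k'/2$, without remarking on either restriction. Be aware, though, that the repair you sketch does not close the gap: a limit as $y \to (-\eta m_k'/2)^+$ controls $\pi$ only at that boundary value of $\mathcal L_k'$, and the ``favorable'' contribution of $\pi \mathcal L_k'$ is already fully spent in reducing the claim to $\pi(\tau_2\delta_k^2 - \mathcal L_k') \le \tau_1\delta_k^2$. Indeed, at the corner $\mathcal L_k' = \eta m_k' = \lambda\delta_k^2$ the acceptance threshold coincides with the mean of $\ell_k'$, so $\pi$ can equal $1/2$ for every $N_k$ (e.g., for symmetric continuous $\ell_{ki}'$), while the required bound is $\pi \le \tau_1/(\tau_2-\lambda)$, which is strictly below $1/2$ for every admissible choice of $\alpha$, $\gamma$, $\lambda$. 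Closing this regime therefore seems to require a new ingredient --- for instance a strengthened sampling condition extending Inequality~\ref{inequalityN} down to $y > -\tau_2\delta_k^2$ --- rather than more careful stitching; this is a deficiency of the paper's argument as much as of your proposal.
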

\begin{proof}
Let $\pi$ denote the probability that the proposed step is accepted. Then,
\begin{align}
\mathbb E[\Delta \phi_k \mid \mathcal M_k^+]
&= (1-\pi)[\alpha(1 - \gamma^{-2})\delta_k^2] + \pi[\mathcal L_k' - \alpha(\gamma^2 - 1)]\delta_k^2\\
&= \pi[\mathcal L_k' - \tau_2\delta_k^2] + \tau_1\delta_k^2 + \lambda\delta_k^2.\label{epichange}
\end{align}

By the lower bound on $\alpha$, $\tau_1 \ge 0$.
If $\eta m_k' < \lambda \delta_k^2$, the step is rejected regardless of
$\mathcal \ell_k$, so the lemma holds.
Also, if $\mathcal L_k' \ge \tau_2\delta_k^2$, then lemma holds for any $\pi \in [0,1]$.
So, consider just $\mathcal L_k' < \tau_2\delta_k^2$ and $\eta m_k' \ge \lambda\delta_k^2$.

The probability $\pi$ of accepting this step is a tail bound
on the sum of iid subgaussian random variables.
By Condition~\ref{cond:l}, Hoeffding's inequality applies.
Then, Inequality~\ref{inequalityN} lets us cancel some of the
remaining iteration-specific variables:
\begin{align}
\pi
&= \mathbb P(\ell_k' \ge \eta m_k'  \mid \mathcal M_k^+)\\
&= \mathbb P(\ell_k' - \mathcal L_k' \ge \eta m_k'- \mathcal L_k'  \mid \mathcal M_k^+)\\
&= \mathbb P\left(\sum_{i=1}^{N_K} (\ell_{ki}' - \mathcal L_k') \ge (\eta m_k'- \mathcal L_k')N_k  \Bigm\vert \mathcal M_k^+ \right)\\
&\le \exp\left\{-\frac{(\eta m_k'- \mathcal L_k')^2 N_k}{2\sigma_k^2}\right\}\\
&\le \frac{\tau_1\delta_k^2}{\tau_2\delta_k^2 - \mathcal L_k'}.\label{pilower}
\end{align}
The lemma follows from substituting Inequality~\ref{pilower} into Equation~\ref{epichange}.
\end{proof}

%%%%%%%%%%%%%%%%%%%%%%%%%%%%%%%%%%%%%%%%%%%%%%%%%%%%%%%%%%%%%%%%%%%%%%%%%%%%%%%

\begin{restatable}{lemma}{probablyaccept}
\label{lemma:probablyaccept}
For each iteration $k$, on the event $\delta_k \le \delta_k^-$, we have
\begin{align}
\mathbb P(\ell_k' \ge \eta m_k' \mid \mathcal M_k^-) \ge \zeta_0\zeta_1  > \frac{1}{2}.
\end{align}
\end{restatable}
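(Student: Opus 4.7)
The plan is to decompose the acceptance probability into the contributions of two stochastic events: the ``good stochastic gradient'' event $G_k$ from Condition~\ref{cond:g}, which has probability at least $\zeta_0$ given $\mathcal M_k^-$ on $\{\delta_k \le \delta_k^-\}$, and the event that the subsequent Hoeffding test succeeds. I would show that, on $G_k$, the test $\ell_k' \ge \eta m_k'$ passes with conditional probability at least $\zeta_1$ given $\mathcal M_k^+$, which together with the nesting of the filtrations yields the bound $\zeta_0\zeta_1$. The strict inequality $\zeta_0\zeta_1 > \tfrac{1}{2}$ is immediate from the allowable ranges $\zeta_0 > \tfrac{1}{2}$ and $\zeta_1 > 1/(2\zeta_0)$ in Table~\ref{params}. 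The second clause $\eta m_k' \ge \lambda \delta_k^2$ of the acceptance rule is automatic on $\{\delta_k \le \delta_k^-\}$, since $\delta_k^- \le \sqrt{\eta m_k'/\lambda}$, so I only need to bound $\mathbb P(\ell_k' \ge \eta m_k' \mid \mathcal M_k^-)$.

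\textbf{Geometric bounds on $G_k$.} I would first extract three bounds about the model maximizer $s_k$. First, the feasible Cauchy-style test point $\delta_k g_k/\|g_k\|$ gives $m_k' \ge \delta_k\|g_k\| - \tfrac{1}{2}\kappa_H \delta_k^2$. Second, using $m_k' = g_k^\intercal s_k + \tfrac{1}{2} s_k^\intercal H_k s_k$ and $\|H_k\| \le \kappa_H$, this implies $g_k^\intercal s_k \ge \delta_k\|g_k\| - \kappa_H \delta_k^2$, hence the component $s_k^\perp$ of $s_k$ perpendicular to $g_k$ satisfies $\|s_k^\perp\|^2 \le 2\kappa_H \delta_k^3/\|g_k\|$. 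Third, Lipschitz continuity of $\nabla \mathcal L$ (Condition~\ref{cond:L}) gives $\mathcal L_k' \ge \nabla \mathcal L_k^\intercal s_k - \tfrac{L}{2}\delta_k^2$, while $\eta m_k' \le \eta g_k^\intercal s_k + \tfrac{\eta \kappa_H}{2}\delta_k^2$.

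\textbf{The gap inequality.} Subtracting these two estimates and projecting $\nabla \mathcal L_k$ along and orthogonal to $g_k$ reduces the lower bound on $\mathcal L_k' - \eta m_k'$ to a scalar inequality. The directional hypothesis $g_k^\intercal \nabla \mathcal L_k \ge (\nu_1+\nu_3)\|\nabla \mathcal L_k\|\|g_k\| + \eta\|g_k\|^2$ supplies $(\nu_1 + \nu_3)\|\nabla \mathcal L_k\|\delta_k$ of positive signal, while the three error terms $\tfrac{L}{2}\delta_k^2$, $\tfrac{\eta \kappa_H}{2}\delta_k^2$, and $\|\nabla \mathcal L_k\|\|s_k^\perp\|$ must be absorbed. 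The definition of $\delta_k^-$ with denominator $\nu_2 L + \nu_2 \eta \kappa_H + 8\kappa_H$, combined with $\|g_k\|\ge \nu_2 \|\nabla \mathcal L_k\|$, is tuned precisely so that these three errors together consume at most $\nu_3 \|\nabla \mathcal L_k\|\delta_k$, leaving $\mathcal L_k' - \eta m_k' \ge \nu_1 \|\nabla \mathcal L_k\|\delta_k$. The $8\kappa_H$ summand in particular handles the half-power scaling of the misalignment term $\|s_k^\perp\| \propto \delta_k\sqrt{\kappa_H \delta_k/\|g_k\|}$.

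\textbf{Hoeffding and conclusion.} Given this gap on $G_k$, Hoeffding's inequality applied to the $\sigma_k$-subgaussian iid average $\ell_k'$ (Condition~\ref{cond:l}) yields $\mathbb P(\ell_k' < \eta m_k' \mid G_k, \mathcal M_k^+) \le \exp\left(-N_k \nu_1^2 \|\nabla \mathcal L_k\|^2 \delta_k^2/(2\sigma_k^2)\right)$, which is at most $1 - \zeta_1$ by Inequality~\ref{inequalityN2}. Integrating out the $\ell_k'$ sample and then multiplying by $\mathbb P(G_k \mid \mathcal M_k^-) \ge \zeta_0$ yields the claimed $\zeta_0\zeta_1$ bound. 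The main obstacle is the gap-inequality step: absorbing three errors with different scalings in $\delta_k$ (two linear, one half-power) into a single slack $\nu_3\|\nabla \mathcal L_k\|\delta_k$ is the delicate part, and it is here that the specific denominator in $\delta_k^-$ earns its keep. The remaining pieces are direct applications of the stated conditions.
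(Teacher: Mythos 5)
Your overall architecture is exactly the paper's: condition on the ``good gradient'' event of Condition~\ref{cond:g} (probability $\ge \zeta_0$ given $\mathcal M_k^-$), show that on that event $\mathcal L_k' - \eta m_k' \ge \nu_1\|\nabla\mathcal L_k\|\delta_k$, and then apply Hoeffding's inequality with Inequality~\ref{inequalityN2} to get the conditional acceptance probability $\ge \zeta_1$; the product and the parameter ranges give $\zeta_0\zeta_1 > 1/2$. Those outer layers of your argument are fine. The problem is in the geometric step, and it is a genuine gap, not a cosmetic one.

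Your bound on the misalignment of $s_k$ is too weak by half a power of $\delta_k$. From the Cauchy-point argument you only learn that $g_k^\intercal s_k \ge \delta_k\|g_k\| - \kappa_H\delta_k^2$, and Pythagoras then gives $\|s_k^\perp\|^2 \le \delta_k^2 - \beta_k^2 \le 2\kappa_H\delta_k^3/\|g_k\|$, i.e.\ $\|s_k^\perp\| = O(\delta_k^{3/2})$. The resulting error term is $\|\nabla\mathcal L_k\|\,\|s_k^\perp\| \le \|\nabla\mathcal L_k\|\,\delta_k\sqrt{2\kappa_H\delta_k/\|g_k\|}$. Under $\delta_k \le \delta_k^-$ the third term of the min only guarantees $\kappa_H\delta_k/\|g_k\| \le \kappa_H\delta_k/(\nu_2\|\nabla\mathcal L_k\|) \le \nu_3/8$, so your error term is bounded by $\tfrac{\sqrt{\nu_3}}{2}\|\nabla\mathcal L_k\|\delta_k$ — and $\sqrt{\nu_3}/2$ exceeds any fixed fraction of $\nu_3$ once $\nu_3$ is small. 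Since $\nu_3$ may be taken arbitrarily small in $(0,1-\eta-\nu_1)$, the three error terms cannot be absorbed into the available slack $\nu_3\|\nabla\mathcal L_k\|\delta_k$, so your claim that the $8\kappa_H$ summand ``handles the half-power scaling'' is false: that denominator is calibrated for a misalignment error that is \emph{quadratic} in $\delta_k$, namely $4\kappa_H\|\nabla\mathcal L_k\|\delta_k^2/\|g_k\| \le (4\kappa_H/\nu_2)\delta_k^2$. The paper obtains that quadratic bound by using the exact first-order characterization of the trust-region subproblem solution, $(H_k + \alpha_k I)s_k = g_k$ with $\alpha_k \ge \|g_k\|/\delta_k - \kappa_H$: writing $s_k = \alpha_k^{-1}(g_k - H_k s_k)$ shows the component of $s_k$ orthogonal to $g_k$ has norm at most $\kappa_H\delta_k/\alpha_k \le 2\kappa_H\delta_k^2/\|g_k\|$ once $\kappa_H\delta_k \le \|g_k\|/2$. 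Your Cauchy-point route cannot recover this, because a point that merely nearly maximizes the linear term genuinely can have an orthogonal component of order $\delta_k^{3/2}$; you need the optimality structure of $s_k$ itself. To repair your proof, either import the KKT-based bound on $\|s_k^\perp\|$, or accept a modified $\delta_k^-$ whose third term carries $\nu_3^2$ in place of $\nu_3$ — but the latter proves a different lemma than the one stated.
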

The proof appears in Appendix~\ref{proofs} of the supplementary material.

%%%%%%%%%%%%%%%%%%%%%%%%%%%%%%%%%%%%%%%%%%%%%%%%%%%%%%%%%%%%%%%%%%%%%%%%%%%%%%%

\section{Experiments}\label{experiments}

\begin{figure}
\centering
\vspace{3px}
\begin{subfigure}{.48\textwidth}
  \centering
  \includegraphics[width=1\linewidth]{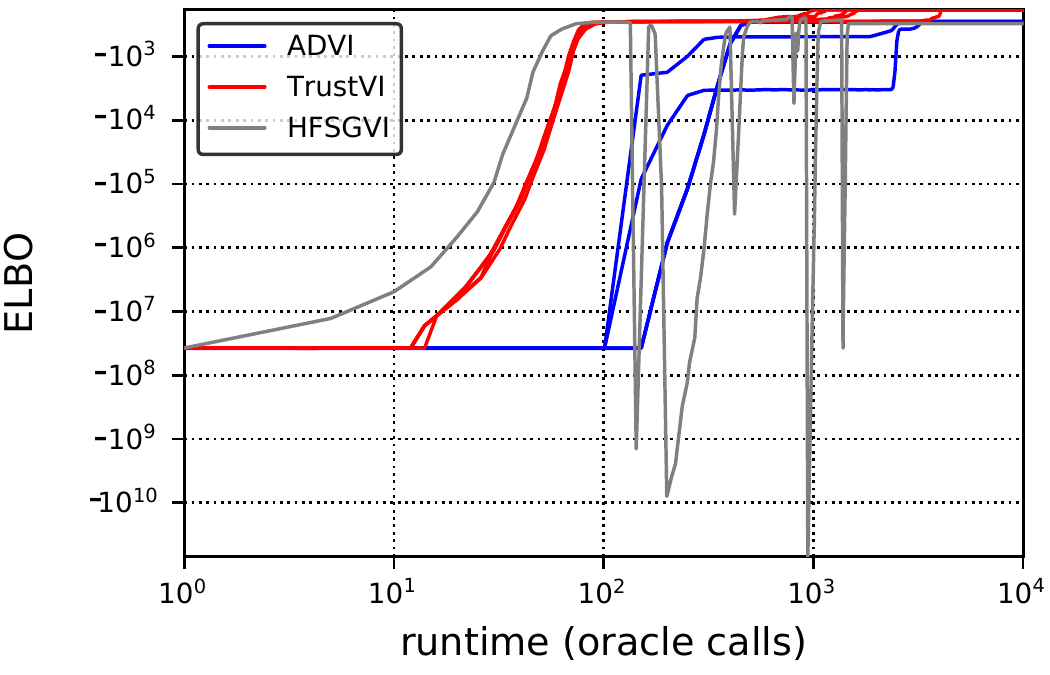}
  \caption{A variance components model (``Dyes'') from~\cite{dyes}. 18-dimensional domain.}
  \label{fig:sub4}
\end{subfigure}
\vspace{8px}
\hfill
\begin{subfigure}{.48\textwidth}
  \centering
  \includegraphics[width=1\linewidth]{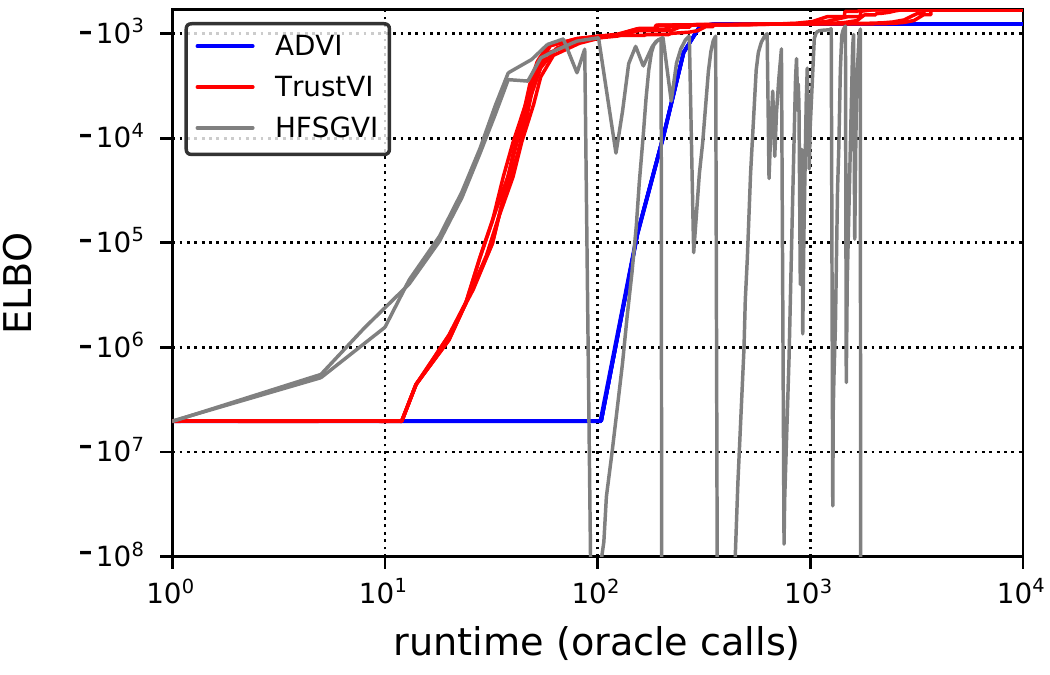}
  \caption{A bivariate normal hierarchical model (``Birats'')
  from~\cite{rats}. 132-dimensional domain.}
\end{subfigure}%
\vspace{8px}
\begin{subfigure}{.48\textwidth}
  \centering
  \includegraphics[width=1\linewidth]{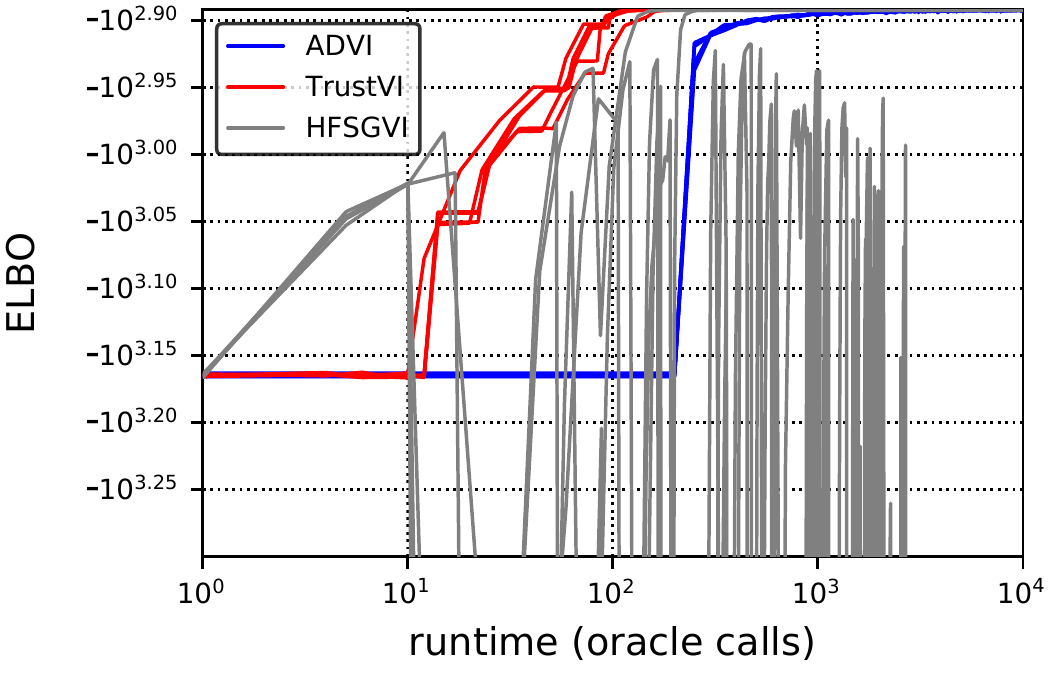}
  \caption{A multi-level linear model (``Electric Chr'') from~\cite{gelman2006data}. 100-dimensional domain.}
\end{subfigure}
\vspace{3px}
\hfill
\begin{subfigure}{.48\textwidth}
  \centering
  \includegraphics[width=1\linewidth]{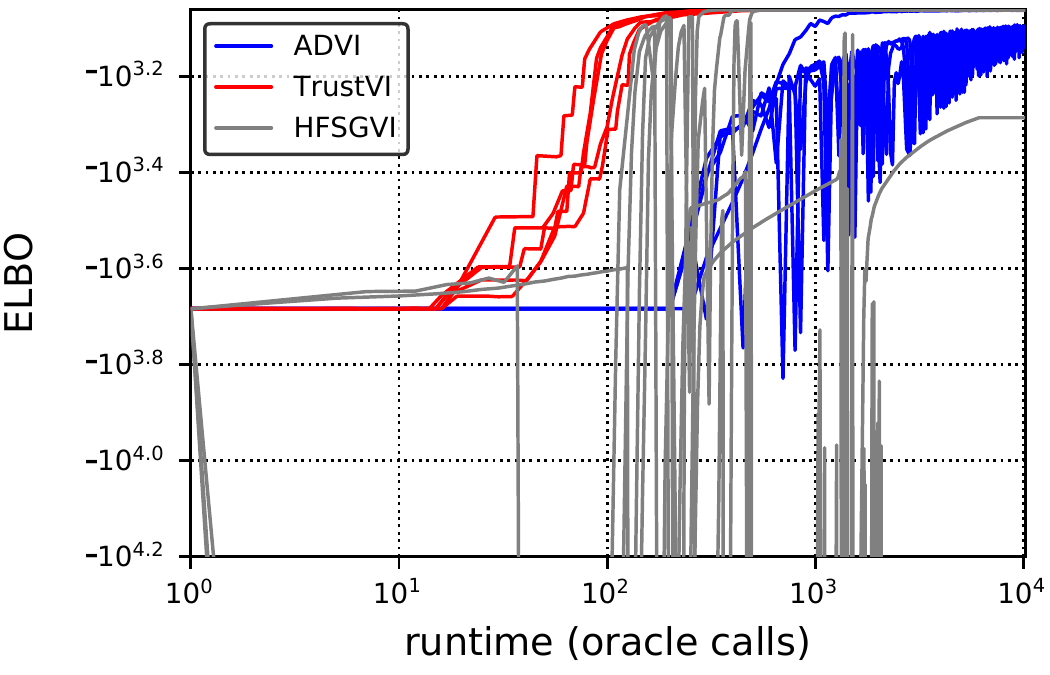}
  \caption{A multi-level linear model (``Radon Redundant Chr'') from~\cite{gelman2006data}. 176-dimensional domain.}
\end{subfigure}%

\caption{Each panel shows optimization paths for five runs of ADVI, TrustVI, and HFSGVI,
for a particular dataset and statistical model. Both axes are log scale.}
\vspace{-5px}
\label{fig:graphs}
\end{figure}

Our experiments compare TrustVI to both Automatic Differentiation Variational Inference (ADVI)~\cite{kucukelbir2017automatic} and Hessian-free Stochastic Gradient Variational Inference (HFSGVI)~\cite{fan2015fast}.
We use the authors' Stan~\cite{carpenter2016stan} implementation of ADVI, and implement the other two algorithms in Stan as well.
% to reduce the chance of discrepancies due to framework rather than to algorithm.

Our study set comprises 183 statistical models and datasets
from~\cite{examplemodels}, an online repository of open-source Stan models and
datasets.
For our trials, the variational distribution is always mean-field multivariate
Gaussian. The dimensions of ELBO domains range from 2 to 2012.

In addition to the final objective value for each method, we compare the runtime
each method requires to produce iterates whose ELBO values are consistently above
a threshold. As the threshold, for each pair of methods we compare, we take
the ELBO value reached by the worse performing method, and subtract one nat from
it.

%We exclude models where either method
%finds an objective value that exceeds the threshold in fewer than 5 iterations;
%these are too easy to be representative of real problems where the choice of
%optimization procedure matters.

We measure runtime in ``oracle calls'' rather than wall clock time so that the units are independent of the implementation.
Stochastic gradients, stochastic Hessian-vector products, and estimates of
change in ELBO value are assigned one, two, and one oracle calls, respectively, to reflect the number of floating point operations required to compute them.
Each stochastic gradient is based on a minibatch of 256 samples of the variational distribution. The number of variational samples for stochastic Hessian-vector products and for estimates of change (85 and 128, respectively)
are selected to match the degree of parallelism for stochastic gradient computations.

%The Hessian-vector product for the ELBO approximation requires three
%Hessian-vector products for the log probability: one for the location variables,
%one for the scale variable, and one for the cross terms. These three
%Hessian-vector products may be done in parallel. We base each on a minibatch
%of 85 draws from the variational distribution---one-third the size of the minibatches used to compute stochastic gradients---so stochastic gradients and stochastic Hessian-vector products are equally parallelizable.

To make our comparison robust to outliers, for each method and each model,
we optimize five times, but ignore all runs except the one that attains the median
final objective value.

\subsection{Comparison to ADVI}\label{advi}

ADVI has two phases that contribute to runtime: During the first phase, a
learning rate is selected based on progress made by SGD during trials of 50 (by
default) ``adaptation'' SGD iterations, for as many as six learning rates. During
the second phase, the variational objective is optimized with the learning rate
that made the most progress during the trials.
If the number of adaptation
iterations is small relative to the number of iterations needed to optimize the
variational objective, then the learning rate selected may be too large: what
appears most productive at first may be overly ``greedy'' for a longer run.
Conversely, a large number of adaptation iteration may leave little
computational budget for the actual optimization. We experimented with both more and
fewer adaptation iterations than the default but did not find a setting that
was uniformly better than the default. Therefore, we report on the default number of
adaption iterations for our experiments.

\textit{Case studies.}
Figure~\ref{fig:graphs} and Appendix~\ref{additional-experiments} show the optimization paths for several models, chosen to demonstrate typical performance.
Often ADVI does not finish its adaptation phase before TrustVI converges.
Once the adaptation phase ends, ADVI generally
increased the objective value function more gradually than TrustVI did, despite
having expended iterations to tune its learning rate.

\textit{Quality of optimal points.}
For 126 of the 183 models (69\%), on sets of five runs, the median optimal values
found by ADVI and TrustVI did not differ substantively.
For 51 models (28\%), TrustVI found better optimal values than ADVI.
For 6 models (3\%), ADVI found better optimal values than TrustVI.

\textit{Runtime.}
We excluded model-threshold pairs from the runtime comparison that did not require at least five iterations to solve; they were too easy to be representative of problems where the choice of optimization algorithm matters.
For 136 of 137 models (99\%) remaining in our study set, TrustVI was faster than ADVI.
For 69 models (50\%), TrustVI was at least 12x faster than ADVI.
For 34 models (25\%), TrustVI was at least 36x faster than ADVI.

%ADVI routinely takes thousands of iterations to reach the objective value that
%TrustVI attains in tens of iterations of its outer loop. These iterations of
%TrustVI sometimes require more computation than those of ADVI.

%Intuitively, ADVI is slow for two reasons: 1) Its learning rate schedule is fixed
%a priori and decays rapidly enough to 0 that it is square-summable. This learning
%rate schedule limits the step size and hence the rate of convergence for a
%Lipschitz objective function. 2) It fails to account for the objective
%function's curvature. Extensions to SGD like AdaGrad~\cite{duchi2011adaptive}
%and Adam~\cite{kingma2014adam} adjust for the relative scales of the parameters
%but not the curvature in general. And, Alp says they don't work well.

\subsection{Comparison to HFSGVI}

HFSGVI applies Newton's method---an algorithm that converges for
convex and deterministic objective functions---to an objective
function that is neither. But do convergence guarantees matter in practice?

%We cannot implement HFSGVI exactly as written because~\cite{fan2015fast} calls for applying conjugate gradient to a linear system of equations that is often not positive definite. With guidance from HFSGVI's inventors, we implemented HFSGVI to take a finite step upon encountering negative curvature based on the gradient, and then terminate.

%Additionally, the HFSGVI algorithm as described in~\cite{fan2015fast} calls for using a small fixed number of conjugate gradient iterations, and suggests 10, as an alternative to completely solving the linear system. TrustVI, on the other hand, uses a variable number of subsolver iterations, terminating when a tolerance on the residual is reached. Often many fewer than than 10 iterations are necessary to solve the subproblem to high precision. To improve HFSGVI's competitiveness in terms of runtime, we also terminate its conjugate gradient solver early if little additional progress could be made during subsequent iterations.

Often HFSGVI takes steps so large that numerical overflow occurs during
the next iteration: the gradient ``explodes'' during the next iteration if we
take a bad enough step. With TrustVI, we reject obviously bad steps (e.g., those
causing numerical overflow) and try again with a smaller trust region.
We tried several heuristics to workaround this problem with HFSGVI, including shrinking
the norm of the very large steps that would otherwise cause numerical overflow.
But ``large'' is relative, depending on the problem, the parameter, and the
current iterate; severely restricting step size would unfairly limit
HFSGVI's rate of convergence. Ultimately, we excluded 23 of the 183 models from
further analysis because HFSGVI consistently generated numerical overflow errors
for them, leaving 160 models in our study set.

\textit{Case studies.}
Figure~\ref{fig:graphs} and Appendix~\ref{additional-experiments} show that even
when HFSGVI does not step so far as to cause numerical overflow, it
nonetheless often makes the objective value worse before it gets better.
HFSGVI, however, sometimes makes faster progress during the early iterations, while TrustVI is rejecting steps as it searches for an appropriate trust region radius.

\textit{Quality of optimal points. }
For 107 of the 160 models (59\%), on sets of five runs, the median optimal value found by TrustVI and HFSGVI did not differ substantively.
For 51 models (28\%), TrustVI found a better optimal values than HFSGVI.
For 1 model (0.5\%), HFSGVI found a better optimal value than TrustVI.

\textit{Runtime.}
We excluded 45 model-threshold pairs from the runtime comparison that did not require at least five iterations to solve, as in Section~\ref{advi}.
For the remainder of the study set, TrustVI was faster than HFSGVI for 61 models, whereas HFSGVI was faster than TrustVI for 54 models.
As a reminder, HFSGVI failed to converge on another 23 models that we excluded from the study set.

\section{Conclusions}
For variational inference, it is no longer necessary to pick between slow
stochastic first-order optimization (e.g., ADVI) and fast-but-restrictive deterministic second-order optimization.
The algorithm we propose, TrustVI, leverages stochastic second-order information, typically finding a solution at least one order of magnitude faster than ADVI.
While HFSGVI also uses stochastic second-order information, it
lacks convergence guarantees.
For more than one-third of our experiments, HFSGVI terminated at substantially worse ELBO values than TrustVI, demonstrating that convergence theory matters in practice.
\vspace{10px}
\small
\bibliography{trustvi}
\bibliographystyle{unsrt}

\clearpage
\appendix
\section{Additional proofs}\label{proofs}

\probablyaccept*
\begin{proof}
Let $A_k$ denote the event that
\begin{align}
\mathcal L_k' - \eta m_k' \ge  \nu_1 \| \nabla \mathcal L_k \| \delta_k.
\end{align}

By Lemma~\ref{lemma:realmodeldiff},
\begin{align}
\mathbb P(A_k \mid \mathcal M_k^-)
&\ge \mathbb P(c_1 \delta_k - c_2 \delta_k^2 \ge  \nu_1 \| \nabla \mathcal L_k \| \delta_k \mid \mathcal M_k^-)\\
&= \mathbb P(c_1 - \nu_1 \| \nabla \mathcal L_k \| - c_2 \delta_k \ge  0 \mid \mathcal M_k^-).
\end{align}

By Condition~\ref{cond:g}, with probability $\zeta_0$,
\begin{align}
c_1 \ge (\nu_1 + \nu_3) \| \nabla \mathcal L_k \|
\end{align}
and
\begin{align}
c_2 \le \frac{4\kappa_H}{\nu_2} + \frac{L}{2} + \frac{\eta\kappa_H}{2}.
\end{align}

Therefore, for $\delta_k$ small enough to meet the conditions of the lemma,
\begin{align}
\mathbb P(A_k \mid \mathcal M_k^-) &\ge \zeta_0.
\end{align}

Now, suppose $A_k$ holds. Hoeffding's inequality applies by Condition~\ref{cond:l}.
Inequality~\ref{inequalityN2} lets us cancel the remaining iteration-specific variables:
\begin{align}
\mathbb P(\ell_k' \ge \eta m_k' \mid \mathcal M_k^+)
&= 1 - \mathbb P(\mathcal L_k' - \ell_k' \ge \mathcal L_k' - \eta m_k' \mid \mathcal M_k^+)\\
&\ge 1 - \exp\left\{-\frac{(\mathcal L_k' - \eta m_k')^2 N_k}{2\sigma_k^2}\right\}\\
&\ge 1 - \exp\left\{-\frac{\nu_1^2 \| \nabla \mathcal L_k \|^2 \delta_k^2 N_k}{2\sigma_k^2}\right\}\\
&\ge \zeta_1.
\end{align}

The lemma follows because
\begin{align}
\mathbb P(\ell_k' \ge \eta m_k' \mid \mathcal M_k^-) \ge
\mathbb P(A_k \mid \mathcal M_k^-) \mathbb P(\ell_k' \ge \eta m_k' \mid \mathcal M_k^+, A_k).
\end{align}

\end{proof}

%%%%%%%%%%%%%%%%%%%%%%%%%%%%%%%%%%%%%%%%%%%%%%%%%%%%%%%%%%%%%%%%%%%%%%%%%%%%%%%

\begin{restatable}{lemma}{mylemma}
\label{lemma:realmodeldiff}
Define
\begin{align}
c_1 \triangleq \left[\nabla \mathcal L_k\right]^\intercal \frac{g_k}{\|g_k\|} - \eta\|g_k\|
\text{\;\;\; and \;\;\;}
c_2 \triangleq 4\kappa_H \frac{\| \nabla \mathcal L_k \|}{\|g_k\|} + \frac{L}{2} + \frac{\eta\kappa_H}{2}.
\end{align}
If $\delta_k \le \delta_k^-$, then
\begin{align}
\mathcal L_k'  - \eta m_k' &\ge c_1\delta_k -c_2\delta_k^2 \as
\end{align}
\end{restatable}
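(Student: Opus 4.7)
My approach is to reduce the lemma to a geometric comparison between the maximizer $s_k$ and the Cauchy point $s^\star \triangleq \delta_k g_k/\|g_k\|$, and then to control the discrepancy $\|s_k - s^\star\|$ using the KKT conditions of the trust-region subproblem.

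First I would use Condition~\ref{cond:L} (Lipschitz gradient) to get the Taylor bound $\mathcal L_k' \ge \nabla\mathcal L_k^\intercal s_k - \tfrac{L}{2}\|s_k\|^2$, and Condition~\ref{cond:H} to bound $\eta m_k' \le \eta g_k^\intercal s_k + \tfrac{\eta\kappa_H}{2}\|s_k\|^2$. Together with $\|s_k\|\le\delta_k$, these yield
\begin{align}
\mathcal L_k' - \eta m_k' \ge (\nabla\mathcal L_k - \eta g_k)^\intercal s_k - \tfrac{L + \eta\kappa_H}{2}\delta_k^2,
\end{align}
which accounts for the $L/2$ and $\eta\kappa_H/2$ terms of $c_2$. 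It therefore suffices to show $(\nabla\mathcal L_k - \eta g_k)^\intercal s_k \ge c_1\delta_k - 4\kappa_H\|\nabla\mathcal L_k\|\delta_k^2/\|g_k\|$.

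Next I would observe that $(\nabla\mathcal L_k - \eta g_k)^\intercal s^\star = c_1\delta_k$ by construction of $s^\star$ and $c_1$. Splitting $s_k = s^\star + (s_k - s^\star)$ and noting that $g_k^\intercal s_k \le \|g_k\|\|s_k\| \le \|g_k\|\delta_k = g_k^\intercal s^\star$ (Cauchy-Schwarz plus feasibility), the contribution $-\eta g_k^\intercal(s_k - s^\star)$ is nonnegative. Therefore,
\begin{align}
(\nabla\mathcal L_k - \eta g_k)^\intercal s_k \ge c_1\delta_k + \nabla\mathcal L_k^\intercal(s_k - s^\star) \ge c_1\delta_k - \|\nabla\mathcal L_k\|\,\|s_k - s^\star\|.
\end{align}
The whole proof then reduces to the deterministic estimate $\|s_k - s^\star\| \le 4\kappa_H\delta_k^2/\|g_k\|$.

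The main obstacle is this last bound, which I would handle by a case split on the relative sizes of $\|g_k\|$ and $\kappa_H\delta_k$. If $\|g_k\| \le 2\kappa_H\delta_k$, the target bound exceeds $2\delta_k$, so the trivial estimate $\|s_k - s^\star\| \le \|s_k\| + \|s^\star\| \le 2\delta_k$ suffices. Otherwise, $\|g_k\| > 2\kappa_H\delta_k$, and I would first argue that $s_k$ lies on the boundary: any interior maximizer would satisfy $s_k = -H_k^{-1}g_k$ (requiring $H_k$ negative definite) and hence $\|g_k\| = \|H_k s_k\| \le \kappa_H\|s_k\| \le \kappa_H\delta_k$, contradicting the case assumption. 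On the boundary, KKT gives $g_k + H_k s_k = (\mu/\delta_k)\,s_k$ with $\mu = \|g_k + H_k s_k\|$, equivalently $s_k = (\delta_k/\mu)(g_k + H_k s_k)$. By the reverse triangle inequality, $|\mu - \|g_k\|| \le \kappa_H\delta_k$, so $\mu > \|g_k\|/2$. Now decompose
\begin{align}
s_k - s^\star = \delta_k g_k \cdot \frac{\|g_k\| - \mu}{\mu\|g_k\|} + \frac{\delta_k}{\mu}H_k s_k
\end{align}
and bound each term by $\kappa_H\delta_k^2/\mu$ to conclude $\|s_k - s^\star\| \le 2\kappa_H\delta_k^2/\mu \le 4\kappa_H\delta_k^2/\|g_k\|$. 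Assembling these bounds gives the lemma; notice that the hypothesis $\delta_k \le \delta_k^-$ is used only through Condition~\ref{cond:H}, so the inequality is pathwise.
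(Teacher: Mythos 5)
Your proof is correct, and it arrives at the paper's exact constants $c_1$ and $c_2$ by a genuinely different route. The paper begins with the same descent bound $\mathcal L_k' \ge s_k^\intercal \nabla \mathcal L_k - \tfrac{L}{2}\delta_k^2$ and bounds $m_k'$ by Cauchy--Schwarz as $m_k' \le \|g_k\|\delta_k + \tfrac{\kappa_H}{2}\delta_k^2$, but its core step invokes the exact trust-region optimality system $(H_k + \alpha_k I)s_k = g_k$ with $H_k + \alpha_k I$ PSD, derives $\alpha_k \ge \|g_k\|/\delta_k - \kappa_H$ and hence $g_k^\intercal s_k \ge \|g_k\|\delta_k - 2\kappa_H\delta_k^2$, and then decomposes $s_k$ into a multiple of $g_k/\|g_k\|$ plus an orthogonal remainder $g^\perp$. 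You instead compare $s_k$ to the Cauchy point $s^\star = \delta_k g_k/\|g_k\|$, use the sign of $g_k^\intercal(s_k - s^\star)$ to absorb the $-\eta g_k^\intercal s_k$ contribution, and reduce the lemma to the single estimate $\|s_k - s^\star\| \le 4\kappa_H\delta_k^2/\|g_k\|$, established by a case split on $\|g_k\|$ versus $2\kappa_H\delta_k$ together with the boundary KKT identity. Your route buys something concrete: the paper's asserted bound $\|g^\perp\| \le 2\kappa_H\delta_k^2/\|g_k\|$ does not obviously follow from its lower bound on $\beta_k$ (Pythagoras gives only $\|g^\perp\| \le 2\delta_k\sqrt{\kappa_H\delta_k/\|g_k\|}$ from that bound), whereas your KKT decomposition supplies the needed $O(\delta_k^2)$ control of the transverse component directly. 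Two small remarks: an interior maximizer requires only $g_k + H_k s_k = 0$ with $H_k \preceq 0$, not invertibility of $H_k$, but your contradiction $\|g_k\| = \|H_k s_k\| \le \kappa_H \delta_k$ already follows from first-order stationarity, so nothing breaks; and, like the paper, you implicitly assume $g_k \ne 0$, which the lemma statement itself forces since $c_1$ and $c_2$ divide by $\|g_k\|$. Your observation that the hypothesis $\delta_k \le \delta_k^-$ enters only through Condition~\ref{cond:H} matches the paper's usage.
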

\begin{proof}
By Condition~\ref{cond:L}, for some $t \in (0,1)$,
\begin{align}
\mathcal L_k'
&= \mathcal L(\omega_k + s_k) - \mathcal L_k\\
&= s_k^\intercal \nabla \mathcal L_k
    + \frac{1}{2}s_k^\intercal \left[ \nabla^2 \mathcal L (\omega_k + ts_k) \right] s_k\\
&\ge s_k^\intercal \nabla \mathcal L_k
    - \frac{L}{2} \delta_k^2.\label{lsplushess}
\end{align}

To lower bound the first term, we first express the proposed step $s_k$ in terms
of $g_k$.
Because $s_k$ solves
\begin{align}
\min_s g_k^\intercal s + \frac{1}{2} s^\intercal H_k s : \|s\| \le \delta_k,
\end{align}
there exists $\alpha_k \ge 0$
such that
\begin{align}
(H_k + \alpha_k I) s_k = g_k.\label{exactquadmin}
\end{align}
The matrix $(H_k + \alpha_k I)$ is PSD.
It follows that
\begin{align}
\|(H_k + \alpha_k I)^{-1} g_k\| \le \delta_k.
\end{align}
Therefore, by Condition~\ref{cond:H},
\begin{align}
\alpha_k \ge \frac{\|g_k\|}{\delta_k} - \kappa_H.\label{alphabnd}
\end{align}
By Equality~\ref{exactquadmin} and Inequality~\ref{alphabnd},
\begin{align}
g_k^\intercal s_k
&= \left[(H_k + \alpha_k I)s_k\right]^\intercal s_k \\
&=s^\intercal H_k s_k + \alpha_k s^\intercal s_k \\
&\ge -\kappa_H \delta_k^2 + \alpha_k \delta_k^2 \\
&\ge \|g_k\|\delta_k - 2\kappa_H \delta_k^2.
\end{align}
It follows that
\begin{align}
s_k = \beta_k \frac{g_k}{\|g_k\|} + g^\perp,
\end{align}
for
\begin{align}
\beta_k \ge \delta_k - \frac{2\kappa_H}{\|g_k\|} \delta_k^2.
\end{align}
for some $g^\perp \perp g_k$. For any $g^\perp$,
\begin{align}
\|g^\perp\| \le \frac{2\kappa_H}{\|g_k\|} \delta_k^2.
\end{align}

Now, with $s_k$ expressed in terms of $g_k$, we lower bound the first term of
Equation~\ref{lsplushess}:
\begin{align}
s_k^\intercal \nabla \mathcal L_k
&\ge \beta_k \left[\nabla \mathcal L_k\right]^\intercal \frac{g_k}{\|g_k\|}
	- \| \nabla \mathcal L_k \|\frac{2\kappa_H \delta_k^2}{\|g_k\|}\\
&\ge \left[\delta_k - \frac{2\kappa_H}{\|g_k\|} \delta_k^2\right] \left[\nabla \mathcal L_k\right]^\intercal \frac{g_k}{\|g_k\|}
	- \| \nabla \mathcal L_k \|\frac{2\kappa_H \delta_k^2}{\|g_k\|}\\
&\ge \left[\nabla \mathcal L_k\right]^\intercal \frac{g_k}{\|g_k\|}\delta_k
	-4\kappa_H \frac{\| \nabla \mathcal L_k \|}{\|g_k\|}\delta_k^2\label{sLbound}
\end{align}

Now, turning our attention to the improvement to the quadratic model:
\begin{align}
m_k' &= g_k^\intercal + \frac{1}{2}s_k^\intercal H_k s_k\\
&\le \|g_k\|\delta_k + \frac{\kappa_H}{2}\delta_k^2.\label{mbound}
\end{align}

The lemma follows from Inequality~\ref{lsplushess}, Inequality~\ref{sLbound},
and Inequality~\ref{mbound}.
\end{proof}

\newpage
\section{Additional experiments}\label{additional-experiments}

\begin{figure}[h!]
\centering
\begin{subfigure}{.48\textwidth}
  \centering
  \includegraphics[width=1\linewidth]{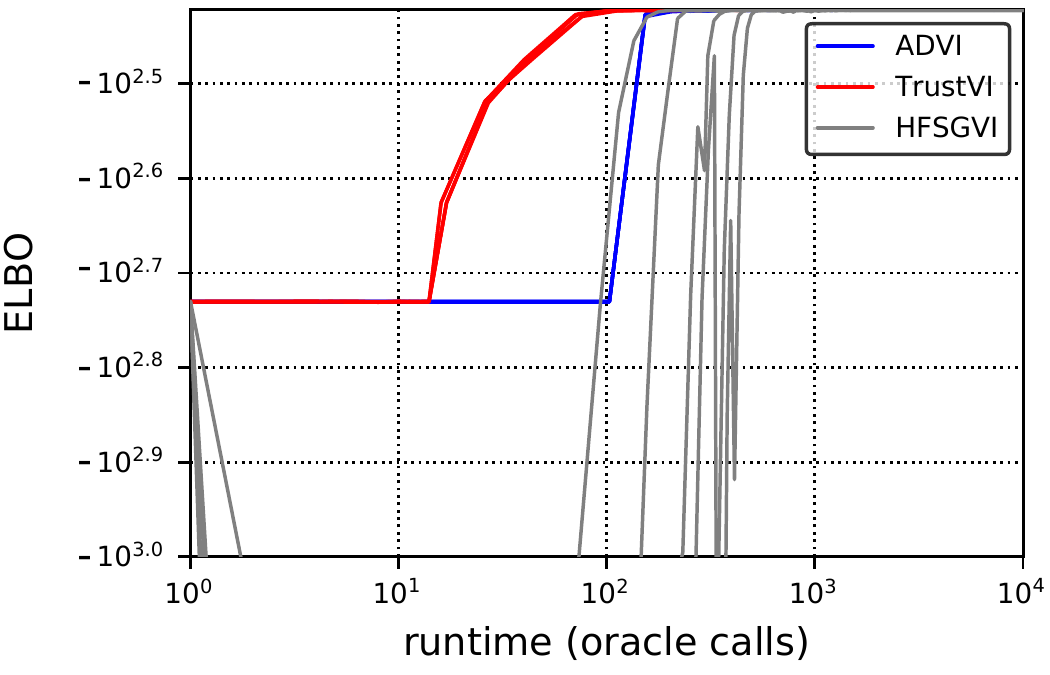}
  \caption{Multinomial logistic regression (``Alligators'')
  from~\cite{alli2}. 56-dimensional domain.\mbox{ }}
\end{subfigure}
\vspace{8px}
\hfill
\begin{subfigure}{.48\textwidth}
  \centering
  \includegraphics[width=1\linewidth]{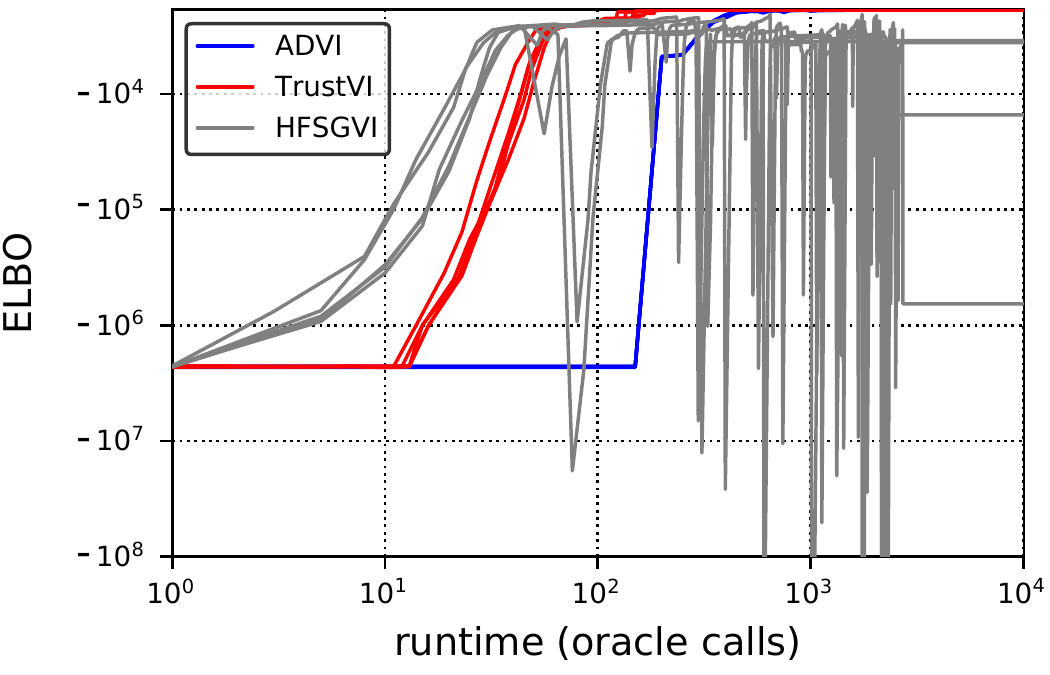}
  \caption{A linear model with two predictors and interaction centered using conventional points (``Kid IQ interaction c2'') from~\cite{gelman2006data}. 10-dimensional domain.}
\end{subfigure}%
\vspace{8px}

\begin{subfigure}{.48\textwidth}
  \centering
  \includegraphics[width=1\linewidth]{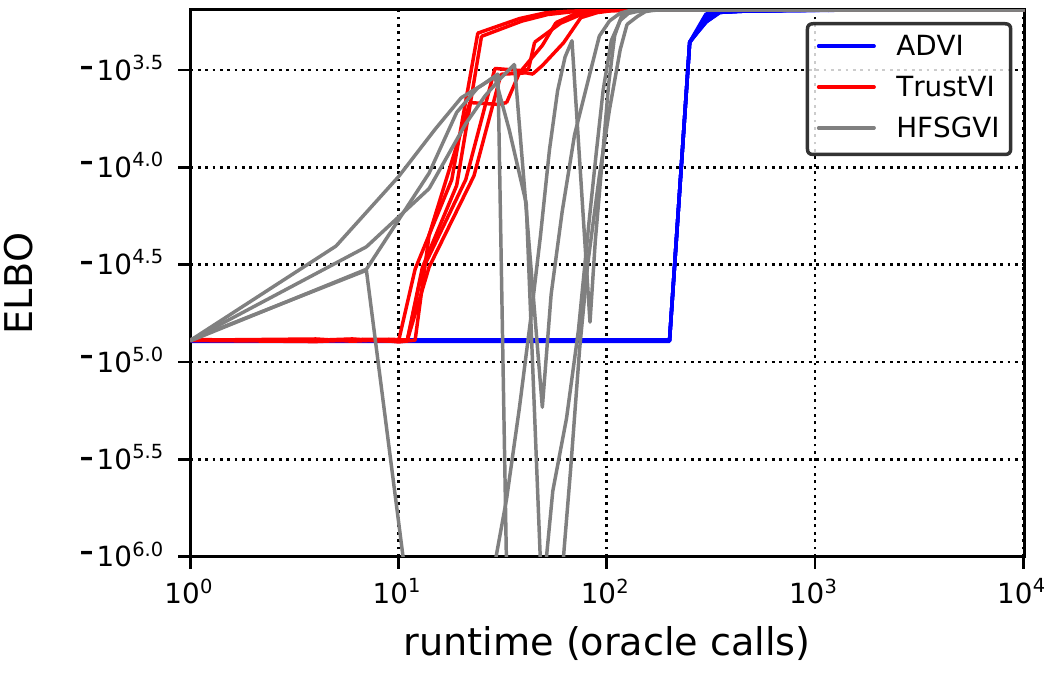}
  \caption{A linear model with two predictors and a log log transformation (``Log Earn Log Height'') from~\cite{gelman2006data}. 8-dimensional domain.}
\end{subfigure}%
\vspace{8px}
\hfill
\begin{subfigure}{.48\textwidth}
  \centering
  \includegraphics[width=1\linewidth]{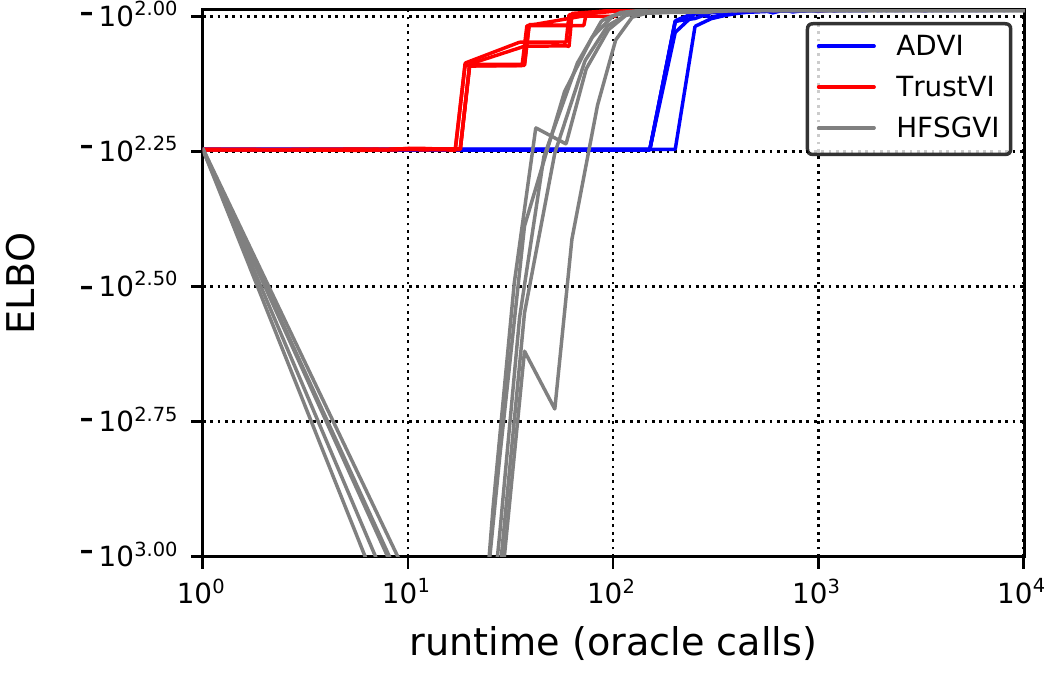}
  \caption{Random effect logistic
         regression (``Seeds'') from~\cite{seeds}. 346-dimensional domain.
         \\\mbox{ }}
\end{subfigure}
\vspace{8px}

\begin{subfigure}{.48\textwidth}
  \centering
  \includegraphics[width=1\linewidth]{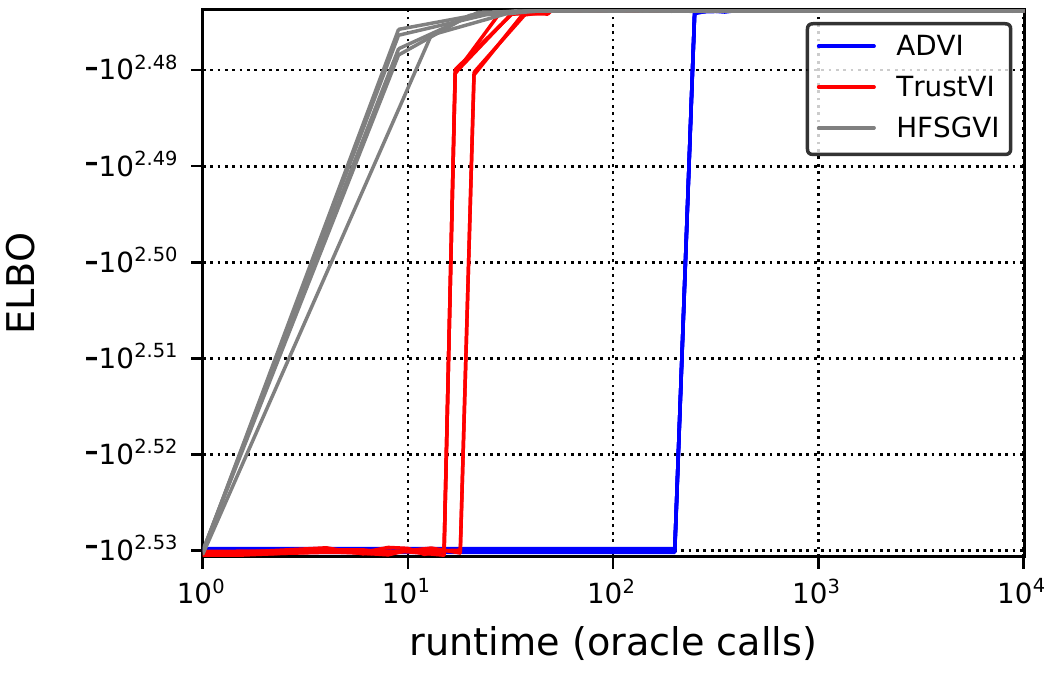}
  \caption{Estimation of the Size of a Closed Population from Capture-Recapture Data (``Mt'') from~\cite{lunn2012bugs}. 8-dimensional domain.}
\end{subfigure}
\vspace{8px}
\hfill
\begin{subfigure}{.48\textwidth}
  \centering
  \includegraphics[width=1\linewidth]{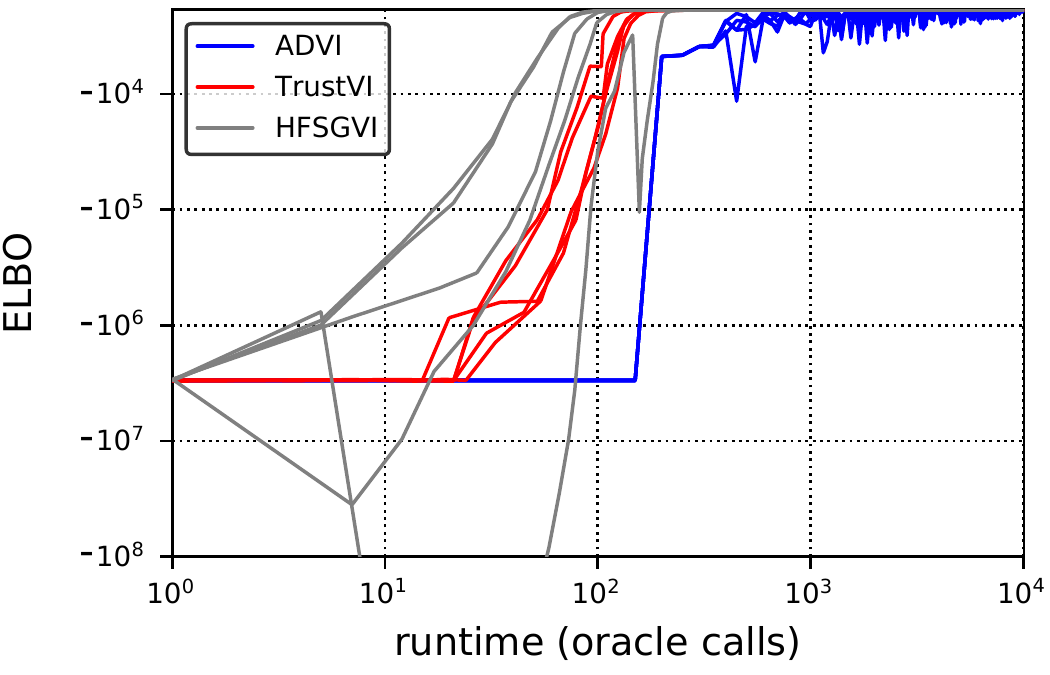}
  \caption{A linear model with two predictors and interaction (``Kid IQ interaction'') from~\cite{gelman2006data}. 10-dimensional domain.}
\end{subfigure}%
\vspace{8px}

\caption{Each panel shows optimization paths for five runs of ADVI, TrustVI, and HFSGVI,
for a particular dataset and statistical model. Both axes are log scale.}
\label{fig:more-graphs}
\end{figure}

\end{document}